\documentclass[accepted]{uai2022} 

\usepackage[american]{babel}

\usepackage{natbib} 
    \bibliographystyle{plainnat}
    
\usepackage{mathtools} 
\usepackage{booktabs} 
\usepackage{tikz} 
\usepackage{algorithm2e}
\usepackage{amsfonts,amsthm}
\newtheorem{theorem}{Theorem}
\usepackage{algorithmic}
\usepackage{cleveref}

\newcommand{\EIG}{I}
\newcommand{\VNMC}{\text{VNMC}}
\newcommand{\ACE}{\text{ACE}}
\newcommand{\MINE}{\text{MINE}}
\newcommand{\Eaff}{I_{\mathrm{aff}}}
\newcommand{\Ereige}{I_{\epsilon}}
\newcommand{\Ereigemax}{I_{\epsilon,\max}}
\newcommand{\hatreig}{\hat I_{\epsilon}}
\newcommand{\hatreigmax}{\hat I_{\epsilon,\max}}
\newcommand{\reigvnmc}{\hatreig^{\VNMC}}
\newcommand{\reigace}{\hatreigmax^{\ACE}}
\newcommand{\reigmine}{\hatreigmax^{\MINE}}
\newcommand{\Ereigejoint}{I_{\epsilon,\text{joint}}}
\newcommand{\EE}{E}

\newcommand{\dlq}{\hat{q}}

\newcommand{\dlxi}{\hat\xi}
\newcommand{\Amb}[2]{\mathcal{A}(#1,#2)}
\newcommand{\DKL}[2]{D_{\mathrm{KL}}(#1\|#2)}

\title{Robust Expected Information Gain for Optimal Bayesian Experimental Design \\
Using Ambiguity Sets}

%
%
\author[1]{\href{mailto:<jgo31@gatech.edu>?Subject=Your UAI 2022 paper}{Jinwoo Go}{}}
\author[1]{Tobin Isaac}
\affil[1]{%
    Computational Science and Engineering Dept.\\
    Georgia Institute of Technology\\
    Atlanta, Georgia, USA
}

  \begin{document}
\maketitle
\begin{abstract}
  The ranking of experiments by expected information gain (EIG) in Bayesian experimental design is sensitive to changes in the model's prior distribution, and the approximation of EIG yielded by sampling will have errors similar to the use of a perturbed prior.
We define and analyze \emph{robust expected information gain} (REIG), a modification of the objective in EIG maximization by minimizing an affine relaxation of EIG over an ambiguity set of distributions that are close to the original prior in KL-divergence.
We show that, when combined with a sampling-based approach to estimating EIG, REIG corresponds to a `log-sum-exp' stabilization
of the samples used to estimate EIG, meaning that it can be efficiently implemented in practice.
Numerical tests combining REIG with variational nested Monte Carlo (VNMC), adaptive contrastive estimation (ACE) and mutual information neural estimation (MINE) suggest that in practice REIG also compensates for the variability of under-sampled estimators.

\end{abstract}

\section{Introduction}

Bayesian Experimental Design (BED) is a probabilistic framework for selecting experiments to learn about one or more uncertain variables. Within BED, the most popular criterion for ranking experiments is by Expected Information Gain (EIG), which estimates from current knowledge, encoded in a prior distribution, how informative a particular experiment is likely to be.  This framework is used in diverse applications across many disciplines \citep{ryan2016fully}, in natural sciences \citep{huan2010accelerated}, social sciences \citep{embretson2013item}, and in machine learning and data analysis \citep{foster2020unified}.

The sensitivity of experimental design to misspecification of the prior distribution has been described in \citep{dasgupta1991robust,ryan2016review}, even in settings where the information gain is computable in closed form.  For more complex models, EIG can only be estimated numerically, which may further affect the reliability of the computed rankings of experiments (or, in the case of continuously parameterized experiments, the gradient of EIG).
EIG is by definition an expectation of an expectation, so general purpose estimates, such as Nested Monte Carlo (NMC) estimation \citep{ryan2003estimating}, can be expensive, slow to converge, and sensitive to underconverged sample estimates.


To address the issues above that affect the reliability of 
EIG estimates in BED,
we introduce a quantity we call \emph{robust expected information gain} (REIG) as a probability-theoretic way of
ranking experiments by their expected information gain for some worst-case small perturbation of the prior.  We also show through convex analysis that the estimation of REIG is a simple post-processing of the samples generated by an NMC-like estimator.  As a result, our methodology is applicable with many existing methods, which we demonstrate in \cref{sec:experiments} by applying REIG to samples generated by three recent popular EIG estimators \citep{foster2020unified,NEURIPS2019_d55cbf21,kleinegesse2020bayesian}.


\section{Background and Notation}

We use $\theta \in \Theta$ to indicate a choice of parameters for a model from a set of possible parameters, and we assume a reference prior probability distribution of $\theta$ which has a measurable density function $p(\theta)$, so that we can write $\EE_{p(\theta)}[f] = \int_{\Theta} f(\theta) p(\theta)\ d\theta.$
We let $\xi \in \Xi$ be a potential experiment from a class of experiments, which has an outcome variable $y(\xi)$.
The experiment $\xi$ is modeled by the likelihood function $p(y|\theta,\xi)$, which for each choice of
$\theta$ defines a measurable probability density function of $y$.  Our interest is in models where the densities $p(\theta)$ and $p(y|\theta,\xi)$ can be efficiently computed, and where samples can be drawn from $p(\theta)$ and from $p(y|\theta,\xi)$ for each $(\theta,\xi)$, so that the joint prior distribution
$p(\theta,y|\xi) = p(\theta)p(y|\theta,\xi)$ also has a computable density function and can be sampled.
We use the notation $p(y|\xi) = \EE_{p(\theta)}[p(y|\theta,\xi)]$ for the marginal distribution of
the outcome $y$.

\subsection{Prior Uncertainty}

While it seems recursive to consider uncertainty in the prior distribution used in Bayesian inference,
the prior distribution is in many settings not determined from first principles or an existing population of data.  In these cases the choice of prior is often dictated by what is required to make a computation tractable or simple, by invariance principles, or by an attempt to be noninformative \citep{stark2015constraints}. But notions of noninformative priors do not scale to high dimensions \citep{yang1996catalog}, and even in low dimensions priors that are close under a weak topology like total variation can have diverging posteriors for the same observations \citep{owhadi2015brittleness}.

So in this work we will consider sets of prior distributions $q(\theta)$ other than the reference $p(\theta)$, but we only consider $q(\theta)$ that are absolutely continuous with respect to $p(\theta)$.  Although methods similar to ours are used to handle model uncertainty \citep{shapiro2021bayesian}, we treat $p(y|\theta,\xi)$ as certain: the only uncertainty we consider is in the prior distribution of $\theta$.  When we extend the notation of derived distributions from $p(\theta)$ to another $q(\theta)$, then, it is always with the same likelihood: the joint prior $q(\theta,y|\xi) = q(\theta)p(y|\theta,\xi)$, the marginal $q(y|\xi) = \EE_{q(\theta)}[p(y|\theta,\xi)]$, etc.

\subsection{Expected Information Gain}

We use $\DKL{p(\theta)}{q(\theta)}$ to denote the Kullback-Leibler divergence, $\DKL{p(\theta)}{q(\theta)}= \int_\Theta p(\theta) \log(p(\theta) / q(\theta))\ d\theta$.
The expected information gain of experiment $\xi$ is defined to be the expectation over the marginal distribution of outcomes $p(y|\xi)$ of the KL-divergence from the Bayesian posterior distribution $p(\theta|y,\xi)$ to the prior $p(\theta)$.  Because the prior is not fixed in this work, we consider EIG to be a function of both the prior $p(\theta)$ and the experiment $\xi$,
\begin{equation}
\EIG(p,\xi) = \EE_{p(y|\xi)}[\DKL{p(\theta|y,\xi)}{p(\theta)}].\label{eq:klform}
\end{equation}
Bayesian optimal experimental design seeks the experiment $\xi^*$ that maximizes this quantity,
$$
\xi^* = \arg\max_{\xi\in\Xi}\ \EIG(p,\xi).
$$
Although the form of EIG in \eqref{eq:klform} is the most intuitive for Bayesian experimental design, 
other equivalent definitions map more directly on the robust variant we introduce in \cref{sec:reig}
and the sampling-based estimators in \cref{sec:sampling}.  The EIG of an experiment is also the mutual information between $\theta$ and $y$,
\begin{align}
\EIG(p,\xi) &= \DKL{p(\theta,y|\xi)}{p(\theta)p(y|\xi)} \\
&= \EE_{p(\theta)}[\DKL{p(y|\theta,\xi)}{p(y|\xi)}] \label{eq:margklform} \\
&= \EE_{p(\theta,y|\xi)}\big[\log\frac{p(y|\theta,\xi)}{p(y|\xi)}\big].
\label{eq:miform}
\end{align}
The last form in \eqref{eq:miform} is the preferred form for many
methods that estimate EIG when the likelihood $p(y|\theta,\xi)$ and
prior $p(\theta)$ can be evaluated directly.
We discuss methods for estimating $\EIG(p,\xi)$ and other related quantities in \cref{sec:sampling}.

\section{A Simple Example with Two Experiments}\label{sec:example}


Suppose a doctor has two blood tests for Condition X: test A has a $10^{-14}\%$
chance of a false negative but a $50\%$ chance of a false positive, and test B
has an $\approx18.4\%$ chance of a false negative and the same chance of a false positive.  If the doctor estimates the prior probability that a patient
has condition X is $50\%$, it turns out that both tests have the same expected information gain of $\approx0.22$ nats.  If that prior probability could be mistaken, however, the two tests have different EIGs for prior probabilities in the vicinity of $50\%$, shown in \cref{fig:simple}.  If the prior probability the patient has Condition X is actually $>50\%$, then test A has a greater EIG than test B, and vice versa if it is $<50\%$.

\begin{figure}
    \centering
    \includegraphics[width=0.75\columnwidth]{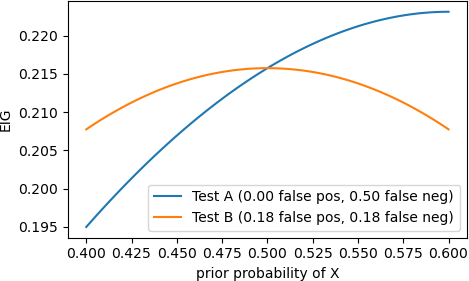}
    \caption{The expected information gain of two tests for condition X depends on the condition's prior probability.}
    \label{fig:simple}
\end{figure}

We interpret these results as follows: For test A, if the prior probability of Condition X is $>50\%$, then a negative test result is surprising because there are essentially no false negatives, but if the prior probability is $<50\%$, then a positive result is less surprising because it has a high probability of being a false positive.  So in comparison to test B, the EIG of test A is more sensitive to the choice of prior.  In any neighborhood of $p = 50\%$, there are priors where test A is expected to be less informative than test B.  So one could argue that a risk-averse doctor, who would maximize how informative the test would be in the worst case, should select test B.

\section{Ambiguity Sets}\label{sec:ambiguity}

In the simple example above, we used a range of prior probabilities for the model parameter to argue that some tests are less locally sensitive to perturbations of the prior distribution.  To generalize this idea from a simple discrete example to other probability distributions, we rely on the notion of an \emph{ambiguity set} \citep{bayraksan2015data}, which is a set of distributions that are not far from a reference prior $p(\theta)$ in some statistical distance.  We use KL-divergence as distance, so our ambiguity set with radius $\epsilon$ centered at reference distribution $p(\theta)$ is
$$
\Amb{\epsilon}{p} = \{q: \DKL{q(\theta)}{p(\theta)} \leq \epsilon\}.
$$

KL-divergence as a distance works well with Bayesian optimal experimental design, because the KL-divergence appears in the definition of EIG, and because the set $\Amb{\epsilon}{p}$ is defined as a convex subset of positive measurable functions $q(\theta)$ with just two conditions: $\int_{\Theta} q(\theta)\ d\theta = 1$ and $\DKL{q(\theta)}{p(\theta)} \leq \epsilon$.
Thus the minimization over $q\in\Amb{\epsilon}{p}$ of a well-behaved convex function $f(q)$, which appears difficult because $\Amb{\epsilon}{p}$ of the infinite dimensionality of space of measurable functions, transforms into an equivalent dual convex program with only two variables.

We direct the interested reader to \citep{shapiro2017distributionally} for additional details: here we summarize the results that are important for this work.  If the objective function of interest $f(q)$ is the expectation under $q(\theta)$ of a measurable quantity of interest $Z(\theta)$, then it is an affine function of $q$ and we may use duality to simplify the maximization or minimization of $\EE_{q(\theta)}[Z(\theta)]$ over $\Amb{\epsilon}{p(\theta)}$ into a dual problem with only one variable $\lambda\geq 0$. 
The maximization problem 
\begin{equation}\label{eq:maxorig}
R_\epsilon = \sup_{q \in \Amb{\epsilon}{p}}
[Z(\theta)]
\end{equation} 
can be solved in dual form as
\begin{equation}\label{eq:maxdual}
R_\epsilon = \inf_{\lambda \geq 0}
\lambda \epsilon + \lambda \log \EE_{p(\theta)}[\exp(\lambda^{-1} Z(\theta))],
\end{equation}
and the minimization problem
\begin{equation}\label{eq:minorig}
M_\epsilon = \inf_{q \in \Amb{\epsilon}{p}}
[Z(\theta)]
\end{equation}
can be solved in dual form as
\begin{equation}\label{eq:mindual}
M_\epsilon = - \inf_{\lambda \geq 0}
\lambda \epsilon + \lambda \log \EE_{p(\theta)}[\exp(- \lambda^{-1}Z(\theta))].
\end{equation}
It is important to note that these are non-parametric results.  Given a parameterized family of priors $\{p(\theta;\psi)\}_\psi$, the gradient $\nabla_\psi \EE_{p(\theta;\psi)}[Z(\theta)]$ is sufficient to compute the optimizer in $\Amb{\epsilon}{p}$ of \eqref{eq:maxorig} or \eqref{eq:minorig} within the parametric family because the objective is affine.  But \eqref{eq:maxdual} and \eqref{eq:mindual} allow us to compute the optimal objective value over the entire ambiguity set without explicitly computing an optimal distribution in the set.

\section{Robust Bayesian Experimental Design} \label{sec:reig}

The insight of the example of \cref{sec:example} was that a risk-averse approach to experimental design that allows for some uncertainty in the prior distribution would select the experiment that maximizes the worst-case EIG in the vicinity of the reference prior $p(\theta)$.  Using the ambiguity set $\Amb{\epsilon}{p(\theta)}$ from \cref{sec:ambiguity}
to define the vicinity of $p(\theta)$, we first formalize the worst-case EIG as $\Ereige^{\mathrm{true}}(p,\xi)$, the \emph{true robust expected information gain with radius $\epsilon$},
\begin{equation}\label{eq:ereigetrue}
\Ereige^{\mathrm{true}}(p,\xi) = \inf_{q\in\Amb{\epsilon}{p}}
\EIG(q,\xi).
\end{equation}
The experiment that maximizes this quantity is
$$
\begin{aligned}
\xi^*_{\mathrm{REIG},\epsilon,\mathrm{true}} &=
\arg\max_{\xi \in \Xi}\ \Ereige^{\mathrm{true}}(p,\xi) \\
&= \arg\max_{\xi \in \Xi} \inf_{q\in\Amb{\epsilon}{p}} \EIG(q,\xi).
\end{aligned}
$$
This optimization problem in this definition has a clear meaning, but
we note that $\EIG(q,\xi)$ is a quantity that is concave in $q$,
so it can have multiple local minima in the convex ambiguity set $\Amb{\epsilon}{p}$ and the duality framework of \cref{sec:ambiguity} cannot be applied directly.

\subsection{Affine Expected Information Gain Approximation}\label{sec:affineassumption}

To define a relaxation to a tractable problem, we split $\EIG(q,\xi)$ into two contributions, one with the marginal distribution of $y$ fixed by the reference prior, $y\sim p(y|\xi)$, and the other a correction that is the divergence between $p(y|\xi)$ and $q(y|\xi)$,
\begin{equation}\label{eq:eigsplit}
\begin{aligned}
\EIG(q,\xi) &=
\EE_{q(\theta,y|\xi)}\big[\log \big( \frac{p(y|\theta,\xi)}{p(y|\xi)}\frac{p(y|\xi)}{q(y|\xi)}\big)\big]
\\
&= \EE_{q(\theta,y|\xi)}\big[\log \frac{p(y|\theta,\xi)}{p(y|\xi)}\big] \\
&\phantom{=} {} - \DKL{q(y|\xi)}{p(y|\xi)}.
\end{aligned}
\end{equation}
We denote the first term in this difference
\begin{equation}\label{eq:eaff}
\Eaff(q,\xi;p) =
\EE_{q(\theta,y|\xi)}\big[\log \frac{p(y|\theta,\xi)}{p(y|\xi)}\big],
\end{equation}
because it is an approximation to $\EIG(q,\xi)$ that
is affine and exact when $q=p$.  By the concavity of EIG with respect to its first argument q,
\begin{equation}\label{eq:ubound}
\Eaff(q,\xi;p) \geq \EIG(q,\xi) \quad \text{for all }q.
\end{equation}
Due to the data processing inequality that the mutual information between two random variables cannot increase by a deterministic or random transformation of the arguments, the error in $\EIG(q,\xi;p)$ is bounded by
\begin{equation}\label{eq:errbound}
\begin{aligned}
|\EIG(q,\xi) - \Eaff(q,\xi;p)| &\leq \DKL{q(y|\xi)}{p(y|\xi)}
\\
&\leq \DKL{q(\theta)}{p(\theta)}.
\end{aligned}
\end{equation}

In fact $\Eaff(q,\xi;p)$ is the affine approximation to $\EIG(q,p)$ that is tangent at $q=p$.

\begin{theorem}\label{thm:tangent}
The function $\Eaff(q,\xi;p)$ from \eqref{eq:eaff} is tangent to
$\EIG(q,\xi)$ at $q=p$ for every design $\xi$.
\end{theorem}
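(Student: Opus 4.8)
The plan is to exploit the exact decomposition already recorded in \eqref{eq:eigsplit}, namely
\begin{equation*}
\EIG(q,\xi) = \Eaff(q,\xi;p) - \DKL{q(y|\xi)}{p(y|\xi)},
\end{equation*}
and to read ``tangent'' as the statement that $\Eaff(\cdot,\xi;p)$ and $\EIG(\cdot,\xi)$ agree both in value and in first-order variation at $q=p$. Since $\Eaff$ is affine in $q$ and coincides with $\EIG$ at $q=p$ (the KL correction vanishes there because $q(y|\xi)=p(y|\xi)$ when $q=p$), it suffices to show that the directional derivative of the correction term $g(q) := \DKL{q(y|\xi)}{p(y|\xi)}$ vanishes at $q=p$ along every admissible perturbation.

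First I would fix a design $\xi$ and consider a smooth path of densities $q_t$ with $q_0 = p$ and $\tfrac{d}{dt}q_t\big|_{t=0} = h$, where $h$ is a mass-preserving perturbation, i.e.\ $\int_\Theta h(\theta)\,d\theta = 0$; this constraint is forced by the requirement that each $q_t$ integrate to one. Because the marginal $q_t(y|\xi) = \EE_{q_t(\theta)}[p(y|\theta,\xi)]$ depends \emph{linearly} on $q_t(\theta)$ through the fixed likelihood, the induced path $m_t(y) := q_t(y|\xi)$ satisfies $m_0 = p(y|\xi)$ and $\dot m_0(y) = \int_\Theta p(y|\theta,\xi)\,h(\theta)\,d\theta$.

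Next I would differentiate $g$ by the chain rule. The variational derivative of $m\mapsto \DKL{m}{m_0}$ is $\log(m/m_0)+1$, which evaluates to the constant $1$ at $m=m_0$ since the logarithm vanishes; hence
\begin{equation*}
\frac{d}{dt}g(q_t)\Big|_{t=0} = \int \big[\log\tfrac{m_0(y)}{m_0(y)}+1\big]\,\dot m_0(y)\,dy = \int \dot m_0(y)\,dy.
\end{equation*}
Substituting $\dot m_0$ and exchanging the order of integration gives $\int \dot m_0(y)\,dy = \int_\Theta h(\theta)\big(\int p(y|\theta,\xi)\,dy\big)\,d\theta = \int_\Theta h(\theta)\,d\theta = 0$, using that $p(\cdot|\theta,\xi)$ is a normalized density for each $\theta$ and that $h$ has zero total mass. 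Thus the first-order variation of the correction vanishes, so $\EIG$ and $\Eaff$ share value and first-order variation along every admissible direction at $q=p$, which is the claimed tangency.

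The main obstacle I anticipate is not the algebra but the analytic justification: establishing Gateaux differentiability in the infinite-dimensional space of densities and validating the interchange of differentiation and integration that the chain-rule step requires. I would handle this by restricting to perturbations $h$ for which the path stays within densities dominated by $p$, with the relevant integrals finite and an integrable envelope available (e.g.\ bounded $\lVert h/p\rVert$), so that dominated convergence applies. The supporting inequality \eqref{eq:ubound}, which already guarantees $\Eaff \geq \EIG$ with equality at $q=p$, then certifies that this common first-order behavior is a genuine tangency rather than a transversal crossing.
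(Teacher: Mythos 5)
Your proposal is correct and follows essentially the same route as the paper's proof: both start from the decomposition \eqref{eq:eigsplit} and show that the first variation of the correction term $\DKL{q(y|\xi)}{p(y|\xi)}$ vanishes at $q=p$, with your explicit $\log(m/m_0)+1$ computation against a zero-mass perturbation spelling out the paper's appeal to $\nabla|_{Q=P}\DKL{Q}{P}=0$ combined with the normalization $\int p(y|\theta,\xi)\,dy=1$. The only difference is that the paper additionally checks that the $\xi$-derivative of the correction vanishes at $q=p$, but this follows immediately from the fact that the correction is identically zero along the slice $q=p$ for every $\xi$, so your argument misses nothing essential.
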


\begin{proof}
It is sufficient to show that the difference between the two functions, which by \eqref{eq:eigsplit} is $\DKL{q(y|\xi)}{p(y|\xi)}$, is gradient free at $q = p$.

We first calculate the gradient with respect to $\xi$:  by the chain rule applied to \eqref{eq:miform}, the derivative in the direction $\dlxi$ is
$$
\begin{aligned}
&\nabla_{\xi} \DKL{q(y|\xi)}{p(y|\xi)}[\dlxi] \\
={}
&\nabla_{q(y|\xi)} \DKL{q(y|\xi)}{p(y|\xi)}[\nabla_\xi q(y|\xi)[\dlxi]]
\\
&{} -
\EE_{q(y|\xi)}\big[
\frac{\nabla_{\xi} p(y|\xi)[\dlxi]}{p(y|\xi)}
\big].
\end{aligned}
$$
For general distributions $Q$ and $P$ the KL divergence satisfies $\nabla|_{Q=P} \DKL{Q}{P} = 0$,
so the first term vanishes when $q = p$.  In the second term, when $q = p$
the denominator cancels with the measure and we have
$$
\begin{aligned}
\EE_{q(y|\xi)}\big[
\frac{\nabla_{\xi} p(y|\xi)[\dlxi]}{p(y|\xi)}
\big] |_{q = p}
&= {\textstyle \int} \nabla_\xi p(y|\xi)[\dlxi]\ dy \\
&= \nabla_\xi (\EE_{p(y|\xi)}[1])[\dlxi] = 0,
\end{aligned}
$$
where we use the fact that $E_{p(y|\xi)}[1] = 1$ for all $\xi$.

Finally, we can see that the gradient with respect to $q(\theta)$ in the direction 
$\dlq(\theta)$ is
$$
\begin{aligned}
&\nabla_{q(\theta)} \DKL{q(y|\xi)}{p(y|\xi)}[\dlq(\theta)] \\
={}
&\nabla_{q(y|\xi)} \DKL{q(y|\xi)}{p(y|\xi)}[\nabla_{q(\theta)} q(y|\xi)[\dlq(\theta)]],
\end{aligned}
$$
which vanishes at $q=p$ for the same reasons as above.
\end{proof}

\subsection{Robust Expected Information Gain (REIG)}\label{sec:subreig}

Having shown that $\Eaff(q,\xi;p)$ is a good approximation to $\EIG(q,\xi)$ near the reference prior $p(\theta)$, we now use it to define a robust quantity that approximates $\Ereige^{\mathrm{true}}$, which we refer to simply as $\Ereige$,
\begin{equation}\label{eq:reig}
\Ereige(p, \xi) =  \inf_{q \in \Amb{\epsilon}{p}} \Eaff(q,\xi;p).
\end{equation}
By the properties established in \eqref{eq:ubound}, \eqref{eq:errbound}, and
\cref{thm:tangent}, we have the following relationships between $\Ereige^{\mathrm{true}}$ and $\Ereige$:
\begin{gather}
    \EIG(p,\xi) \geq \Ereige(p,\xi) \geq \Ereige^{\mathrm{true}} (p,\xi); \\
    |\Ereige(p,\xi) - \Ereige^{\mathrm{true}} (p,\xi)| \leq \epsilon; \\
    |\Ereige(p,\xi) - \Ereige^{\mathrm{true}} (p,\xi)| \in O(\epsilon^2).
\end{gather}
These facts suggest an experiment that maximizes
$\Ereige(p,\xi)$,
$$
\xi^*_{\mathrm{REIG},\epsilon} = \arg\max_{\xi\in\Xi} \Ereige(p,\xi),
$$
has similar robustness to $\xi^*_{\mathrm{REIG},\epsilon,\mathrm{true}}$ over perturbations of the the prior $p(\theta)$, as long as the radius $\epsilon$ of the ambiguity set is not too large.

\subsection{Computation of $\Ereige(p,\xi)$ via duality}
\label{sec:minimization}
We have selected $\Ereige(p,\xi)$ as our robust quantity to optimize because \eqref{eq:reig}
can be optimized by the dual transformation described in \cref{sec:ambiguity}.
Applying \eqref{eq:mindual} to EIG in the form \eqref{eq:margklform}, we have $\Ereige(p,\xi) = $
\begin{equation}\label{eq:reigdual}
-\inf_{\lambda\geq 0} \lambda\epsilon + \lambda\log 
\EE_{p(\theta)}\big[\exp
\big(
-\frac{\DKL{p(y|\theta,\xi)}{p(y|\xi)}}{\lambda}
\big)
\big].
\end{equation}
We will show in \cref{sec:sampling} that this 1D convex optimization problem
can be solved efficiently by a small adaptation of existing
EIG estimators.

\subsection{Related Design Criteria}
\label{sec:maximization}
Our definition of $\Ereige$ was motivated by a risk-aversion argument in favor of the design with the best worst-case EIG in a neighborhood.
Because the approximation $\Eaff(q,\xi;d)$ is affine, however,
maximization over the ambiguity set can also be solved by duality.
This means that the same methodology can be used to define a risk-loving
strategy for experimental design, which selects the experiment
that has the highest EIG for some prior in the ambiguity set.
We call this criterion $\Ereigemax(p,\xi) = $
\begin{equation}\label{eq:reigmax}
\inf_{\lambda\geq 0} \lambda\epsilon + \lambda\log 
\EE_{p(\theta)}\big[\exp
\big(
\frac{\DKL{p(y|\theta,\xi)}{p(y|\xi)}}{\lambda}
\big)
\big].
\end{equation}
This criterion is used in \cref{sec:experiments} to counteract biased underestimation of EIG by some estimators.

Last, we note that our decision to limit the uncertainty in the models of the experiments to just the prior $p(\theta)$ and not the likelihood
$p(y|\theta,\xi)$ is arbitrary, at least from the perspective of the methods we have developed.  An ambiguity set $\Amb{\epsilon}{p(\theta,y|\xi)}$ can be centered around
the joint prior of the model $p(\theta,y|\xi)$, and an affine
approximation can be taken that would allow for optimization over that ambiguity set via duality.  The result would be an even more conservative quantity, $\Ereigejoint(p,\xi) =$
\begin{equation}
-
\inf_{\lambda\geq 0} \lambda\epsilon + \lambda\log 
\EE_{p(\theta,y|\xi)}\big[\exp
\big(
\lambda^{-1}
\log
\frac{p(y|\xi)}{\log p(y|\theta,\xi)}
\big)
\big].
\end{equation}
We will not explore this criterion more in this work.

\section{REIG Estimation via Sampling}
\label{sec:sampling}

The design of efficient estimators for EIG has been the subject of much research, in part because their use in experimental design is computationally demanding.
The combination of nested iterations
to estimate the densities of implicitly defined distributions, to
evaluate the expectation of the EIG, and finally to optimize that quantity
lead to many passes over the problem data as well as many model evaluations.

When introducing an implicitly defined quantity like $\Ereige$,
we should be leery of adding another nested loop to the calculation.
This is why we immediately discounted the design criterion
$\Ereige^{\mathrm{true}}$ from \eqref{eq:ereigetrue}, which would require optimization in the original variables parameterizing $p(\theta)$, which could be numerous.

\subsection{Constructing a REIG Estimator}

When both the prior $p(\theta)$ and the likelihood $p(y|\theta,\xi)$ can be sampled directly, sampling-based approaches to estimating EIG often have a two-level structure: an inner estimator is defined for a fixed $\theta$ and/or $y$ in the integrated quantity --- either $\DKL{p(\theta|y,\xi)}{p(\theta)}$ in \eqref{eq:klform},
$\DKL{p(y|\theta,\xi)}{p(y|\xi)}$ in \eqref{eq:margklform}, or
$\log(p(y|\theta,\xi) / p(y|\xi)$ in \eqref{eq:miform} ---
and an outer Monte Carlo estimator over either $p(\theta)$ or $p(\theta,y|\xi)$ calls the inner estimator for each generated
$\theta$ or $(\theta,y)$. 

This basic paradigm maps closely onto the dual formulation of $\Ereige$ in \eqref{eq:reigdual}, in a method we sketch in \cref{alg:reigalgo} that defines
a REIG estimator $\hat\Ereige$.

\begin{algorithm}[ht]
   \caption{$\Ereige$ Estimation via Sampling}
   \label{alg:reigalgo}
   \begin{enumerate}[noitemsep]
       \item
       Draw $N_1$ i.i.d. samples $\{\theta_i\}_{i=1}^{N_1}$ from $p(\theta)$.
       \item
       For each $\theta_i$, use estimator $\tilde{D}(\theta,\xi)$ to compute an estimate $d_i \gets \tilde{D}(\theta_i, \xi)$ of
       $\DKL{p(y|\theta,\xi)}{p(y|\xi)}$.
       \item
       Solve the 1D convex optimization problem
       \begin{equation}\label{eq:reigdisc}
       M_\epsilon = \inf_{\lambda \geq 0}
       \lambda \epsilon + \lambda \log \frac{1}{N_1} \sum_{i=1}^{N_1} \exp ( - \lambda^{-1} d_i)
       \end{equation}
       and return $-M_\epsilon$.
   \end{enumerate}
\end{algorithm}

In this approach the inner estimator is called $N_1$ times in step 2, which is the same number of times it would have been called to compute the
EIG estimator $\frac{1}{N_1} \sum_{i=1}^{N_1} d_i$, but those estimates are saved and treated as an empirical distribution, so that
the optimization problem in step 3 solves \eqref{eq:reigdual} by sample average approximation (SAA) instead of stochastic approximation (SA).
The assumption is that this 1D convex problem will be solved quickly and the dominant cost in \cref{alg:reigalgo} is the cost of computing the KL-divergence estimators $d_i \gets \tilde{D}(\theta_i, \xi)$.

Although the inner optimization in step 3 is solved by SAA, we note that \cref{alg:reigalgo} can be used within either SAA or SA for the optimization over $\xi$, depending on whether the samples in step 1 are reused or not.
The derivative $\nabla_{\xi} \hat\Ereige(p,\xi)$ can be computed as $-\nabla_d M_{\epsilon} \cdot \nabla_\xi d$, where
$d$ is the vector of $d_i$ estimates from step 2.  The partial derivatives $\nabla_\xi d$ are also present in computing the gradients of EIG estimators, so existing methods for this term can be reused: $\nabla_d M_{\epsilon}$ are the only additional derivatives needed for REIG. Letting $\lambda^*$ be the optimal value and letting $L(d) = \log \frac{1}{N_1}\sum_{i=1}^{N_1} \exp(-d_i)$, if $\lambda^* \neq 0$
then $\nabla_d M_{\epsilon} = \nabla_d L(d / \lambda^*)$.  If $\lambda^i = 0$ there there is some $i^* = \arg\min_i d_i$ and $M_\epsilon = - d_i$, so that either $-e_i = \nabla_d M_\epsilon$ or $-e_i \in \partial_d M_\epsilon$ if $i^*$ is not unique.

\subsection{EIG Estimators}\label{sec:estimators}

There are many possible choices for the estimator $\tilde{D}(\theta, \xi)$ of $\DKL{p(y|\theta,\xi)}{p(y|\xi)}$. In fact, any EIG estimator $\hat{D}(p(\theta),p(y|\theta,\xi))$ that accepts general prior distributions can be used by running a separate instance of $\hat{D}$ for each $\theta_i$ with
the prior distribution $\theta \sim \delta_{\theta_i}$.
In practice, this approach would have poor performance because sequestering the samples $\theta_i$ into separate estimator instances would not allow for vectorization across samples.
Vectorization and batching are best exploited in a nested
EIG estimator if all instances of the inner estimator have the same hyperparameters to maximize throughput.

In the experiments in \cref{sec:experiments}, we have primarily used the estimators developed in 
\citep{NEURIPS2019_d55cbf21,foster2020unified,kleinegesse2020bayesian}.  The first is the \emph{Variational Nested Monte Carlo} (VNMC) estimator, which is based on the form of EIG in \eqref{eq:miform}.
It draws $N$ samples $(\theta_i, y_i) \sim p(\theta,y|\xi)$ from the joint prior, evaluates $\log p(y_i|\theta_i,\xi)$ directly, and then uses an inner estimator for $\log p(y_i|\xi)$.  That estimator is based on the identity
$$
\log p(y|\xi) = \log \EE_{q(\theta)} \big[\frac{p(\theta)}{q(\theta|y,\xi)}p(y|\theta,\xi)\big],
$$
where $q(\theta|y,\xi)$ can be any distribution that is absolutely continuous with respect to $p(\theta)$, but the variance of that expectation is lower the closer $q(\theta|y,\xi)$ is to the posterior distribution $p(\theta|y,\xi)$.  $M$ samples $\{\theta_i^j\}_{j=1}^M$ are drawn from $q(\theta|y,\xi)$ for each $y_i$, resulting in the EIG estimator
\begin{equation}
\hat\EIG^{\VNMC} = 
\sum_{i=1}^N \log \frac{p(y_i|\theta_i, \xi)}{
\frac{1}{M} \sum_{j=1}^M  \frac{p(\theta_i^j)}{q(\theta_i^j | y_i, \xi)} p(y_i | \theta_i^j, \xi).
}
\end{equation}
This estimator is consistent in the limit as $M \to \infty$, but for finite $M$ is in expectation an upper bound for EIG.  For details see \cite{NEURIPS2019_d55cbf21}.

The second estimator that we use is the \emph{Adaptive Contrastive Estimator} (ACE) from \citep{foster2020unified}.  In description it is almost identical to VNMC, except that in estimating $p(y_i|\xi)$ we add
to the samples $\{\theta_i^j\}_{j=1}^M$ the original sample $\theta_i^0 = \theta_i$ drawn from $p(\theta)$ that generated $y_i$.  The result is
\begin{equation}
\hat\EIG^{\ACE} = 
\sum_{i=1}^N \log \frac{p(y_i|\theta_i, \xi)}{
\frac{1}{M+1} \sum_{j=0}^M  \frac{p(\theta_i^j)}{q(\theta_i^j | y_i, \xi)} p(y_i | \theta_i^j, \xi).
}
\end{equation}

This is also a consistent estimator of EIG, but the addition of the prior-samples $\theta_i^0$ to the estimator for $p(y_i|\xi)$ makes it in expectation a lower bound for EIG for finite $M$: see \citep{foster2020unified} for more details.

The last estimator that we use is the \emph{Mutual Information Neural Estimation}  (MINE), which trains the ratio, $\frac{p(y|\theta,\xi}{p(y|\xi)}$, with samples and estimate the EIG with SAA method \citep{kleinegesse2020bayesian},
\begin{equation}\label{eq:mine}
\hat\EIG^{\MINE} = 
\sum_i^N [ T_\psi(\theta_i,y_i) - e^{T_\psi(\theta_i,y_{i}^*)-1} ],
\end{equation}
where $y_i^*$ represents the shuffled $y$ and $T$ is a neural network with parameters $\psi$.

The outer loop for ACE, VNMC and MINE methods draw from $p(\theta,y|\xi)$ or $p(\theta)p(y|\xi)$, and in the implementations provided by the authors one sample from $p(y|\theta_i,\xi)$ is drawn for each of $N$ samples $\theta_i$ drawn from $p(\theta)$. To adapt these methods to the needs of our \cref{alg:reigalgo}, we split $N$ into $N = N_1 N_2$, and draw $N_2$ samples from $p(y|\theta_i,\xi)$ for each of $N_1$ samples drawn from $p(\theta)$. We then take the mean over the $N_2$ samples for our estimate of
$\DKL{p(y|\theta_i,\xi)}{p(y|\xi)}$ on line 2 of our algorithm.

\subsection{REIG as Log-Sum-Exp Stabilization}
\label{sec:logsumexp}

Step 3 of \cref{alg:reigalgo} shows that the convex dual objective function for the $\Ereige$ design criterion manifests as a
$\lambda \epsilon$-biased and 
$\lambda^{-1}$-weighted log-sum-exp combination of the
$-\DKL{p(y|\theta,\xi)}{p(y|\xi)}$ estimates.

From the bounds for log-sum-exp operators we have
$$
\begin{aligned}
\lambda \epsilon -\min_i \{d_i\}
&\geq \lambda\epsilon + \lambda \log \frac{1}{N} \sum_{i=1}^N \exp(-\lambda^{-1} d_i) \\
&> \lambda \epsilon - \min_i \{d_i\} - \lambda \log N.
\end{aligned}
$$
When the ambiguity set radius $\epsilon$ is smaller, the optimal $\lambda$ is larger, and $M_\epsilon$ becomes more like the sample mean; when $\epsilon$ is larger, the optimal $\lambda$ is smaller until eventually the $\lambda \geq 0$ constraint becomes binding.  In that case the limit as $\lambda \to 0$ is achieved and the value is squeezed to become $M_{\epsilon} = -\min_i\ d_i$.

We interpret these facts in the following way: $\Ereige$ tends to bias the samples in the EIG estimate more towards the smaller values of $\DKL{p(y|\theta_i,\xi)}{p(y|\xi)}$ the larger $\epsilon$ is.  A well-recognized problem in EIG estimation \citep{foster2020unified} is the presence of samples where $p(y|\xi)$ is under-estimated, leading to artificially inflated EIG estimates. 

We have until this point consider $\Ereige$ a design criterion in its own right, but this biasing behavior suggests that the use of \cref{alg:reigalgo} with an appropriate choice of $\epsilon$ can also be useful as an estimator for the original EIG criterion whose bias
protects the estimate from sampling error.  This may be a viable approach for stabilizing the computed EIG value when there are insufficient samples to converge the estimate of $p(y|\xi)$ well in the nested sampling approaches.

Since $\hat\EIG^{\VNMC}$ is an upper bound while $\hat\EIG^{\ACE}$ and $\hat\EIG^{\MINE}$ are lower bounds for $\EIG$, we define an estimator $\reigvnmc$ that uses VNMC as the estimator for \eqref{eq:reigdual} and the estimators $\reigace$ and $\reigmine$ that use ACE and MINE, respectively.
The minimization / maximization in each case acts counter to the bias of the sampling process, which we measure in the following section.


\section{Experiments} \label{sec:experiments}

We perform two types of numerical experiments.  In the first type, we test to see how well $\Ereige$ performs as a ``worst-case'' EIG estimate, as discussed in \cref{sec:reig}.  In the second type, we compare the previously developed EIG estimators from \cref{sec:estimators} with a large number of samples to $\Ereige$ with a small number of samples to see if the bias against large summands described in \cref{sec:logsumexp} is as effective as additional samples in stabilizing the computation.

\subsection{REIG as a Worst-Case Estimate}

\begin{figure}
\begin{center}
\centerline{\includegraphics[width=\columnwidth]{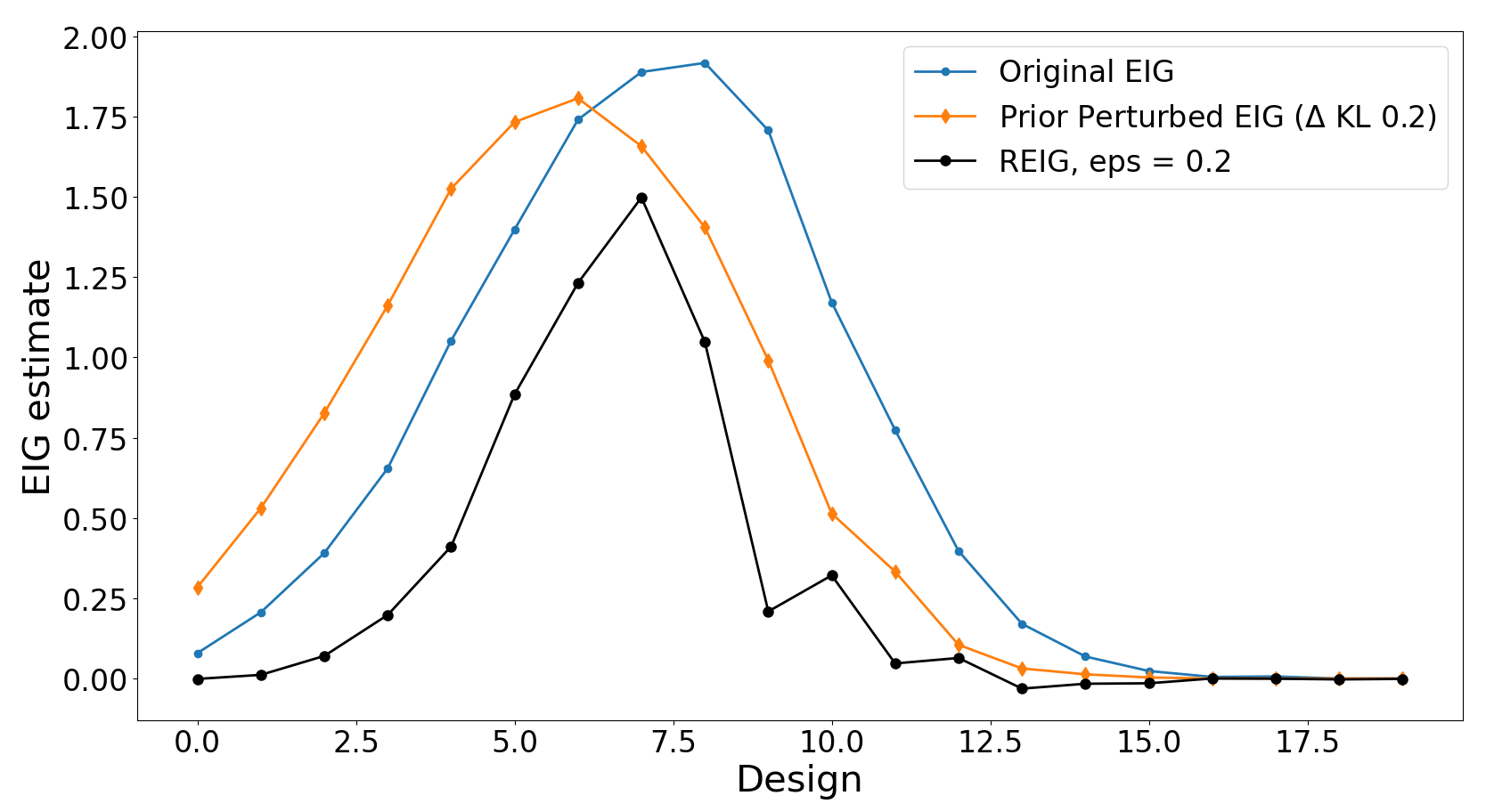}}
\caption{Preference test: EIG change with prior perturbation and the performance of $\Ereige$}
\label{fig:worst_case}
\end{center}
\end{figure}

This test uses the Preference model test case from \citep{NEURIPS2019_d55cbf21}.  The parameter $\theta$ is a location parameter with a reference prior $p(\theta)$ that is normally distributed, and the experiments are indexed by locations relative to $\theta$.

To test how well $\Ereige$ serves at computing worst-case estimate,
we select as second prior $q(\theta)$ such that $\EIG(p)$ and $\EIG(q)$ look visibly distinct over the range of possible experiments in \cref{fig:worst_case}, and then measure the KL-divergence between $p(\theta)$ and $q(\theta)$ and take this number to be $\epsilon$ (in this case, $\epsilon = 0.2$).  We then compute the $\Ereige$ criterion for all experiments as well, and plot them in \cref{fig:worst_case}.
(In all of these calculations we use a sampling estimator with a large number of samples so we can be reasonably sure that the values are converged.)
What we see is that $\Ereige(p,\xi)$ succeeds at being a lower bound for both $\EIG(p,\xi)$ and $\EIG(q,\xi)$ in this case.  We also see that this is not 
the result of a uniform or log-uniform scale reduction: $\Ereige(p,\xi)$ has more drastically reduced the gain of some experiments than others, meaning that $\Ereige(p,\xi)$ determines a different optimal $\xi^*$ than $\EIG(p,\xi)$.



\subsection{EIG Stabilization via REIG} \label{sec:eigest}

We now test the effectiveness of $\hat\Ereige(p,\xi)$'s log-sum-exp stabilization at producing a stabilized estimation of $\EIG(p,\xi)$. We use the benchmarks designed by \citep{NEURIPS2019_d55cbf21, kleinegesse2020bayesian}, especially three experimental designs which have explicit models for the likelihood distribution: A/B test, Preference, and Pharmacokinetic model. 


\subsubsection{A/B test}
An A/B test \citep{kohavi2009controlled, box1978statistics} can be used to determine which experiment in a set of two or more results in the largest information gain.
\cite{NEURIPS2019_d55cbf21} introduces the group size selection problem between groups A and B. 
We have $n$ experiment participants, and we can select $n_A$ participants for group A and $n-n_A$ participants for group B.  Each participant is represented as two random variables: the first is measured for participants in group A, and the second for group B.  This A/B test models each participant's random variables using a Bayesian linear model, $y = X\theta + \epsilon$; the prior and likelihood distributions are Gaussian distributions.

Each of the estimators under consideration uses a neural network to generate samples that musts be trained.
The proposal distribution used by the VNMC and ACE estimators is trained with $A$ and $\Sigma_p$
as in \citep{NEURIPS2019_d55cbf21}: more details about the proposal distribution can be found there. Likewise, the parameters $\psi$ of the neural network in $\hat\EIG^{\MINE}$ are trained for each $\theta$ with samples of $y$ from the given distributions.

In our work, we start from a different reference prior distribution, with its mean taken to be $[4.46,0]$ instead of $[0,0]$. We find that the VNMC estimator shows considerably more error for some designs with a small number of samples $M=30$ (\cref{fig:abtest}, top left), and requires more samples to converge ($M\geq 100$). MINE, on the other hand, doesn't have big difference when we have enough samples (\cref{fig:abtest}, bottom left). 


In contrast to that larger number of samples, we compute $\reigvnmc$ and $\reigace$ estimators using only $M=30$ posterior samples and $\reigmine$ using $1000*10$ samples, with an increasing series of ambiguity set radii $\epsilon$ (\cref{fig:abtest}, right).  The estimate is almost unaffected by the ambiguity set if $\epsilon=0.001$, but increasing $\epsilon$ to $0.1$ seems to improve the estimates for the designs that were highly overestimated without negatively affecting the other experiments that were already properly estimated.
 
\begin{figure}
\begin{center}
\centerline{\includegraphics[width=\columnwidth]{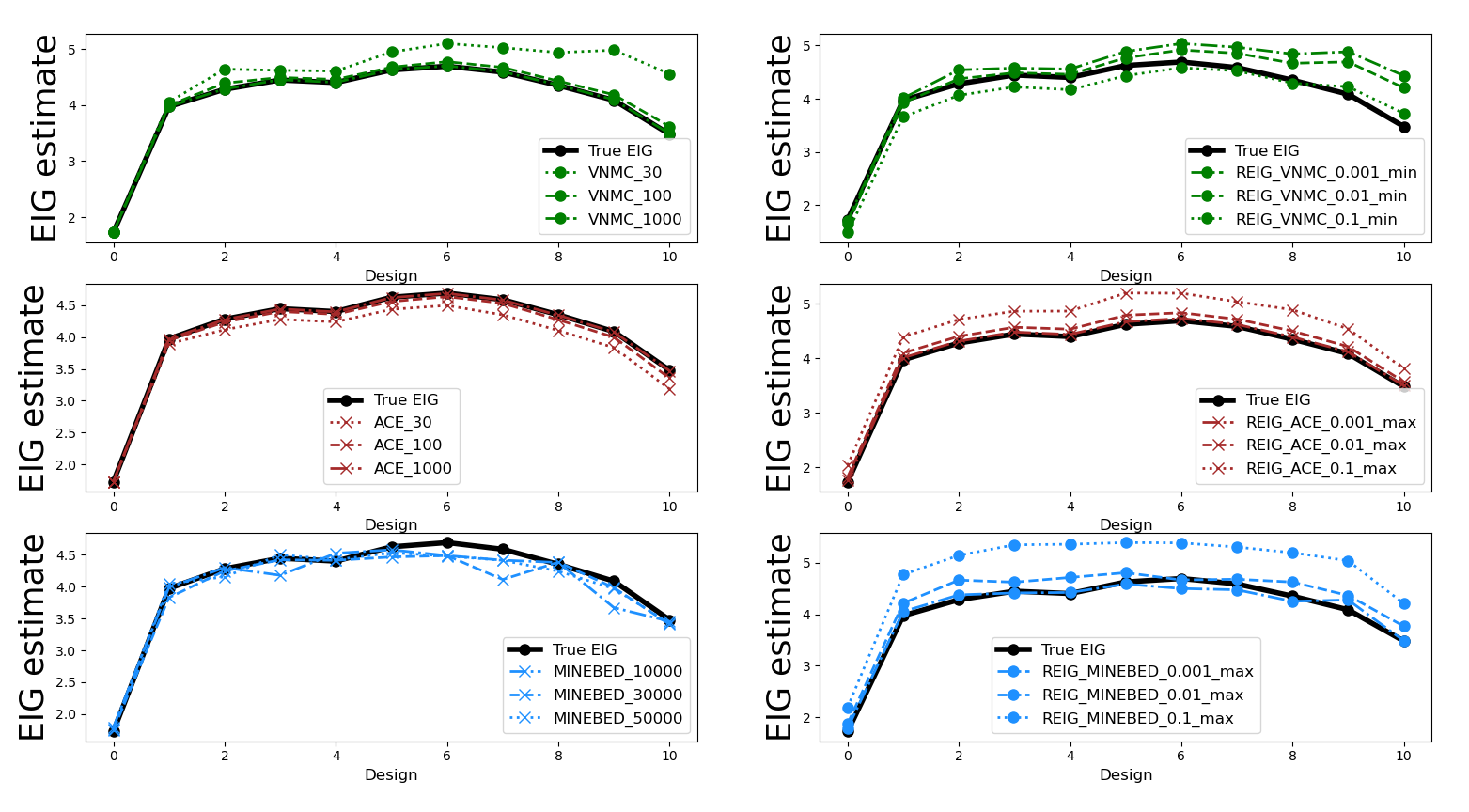}}
\caption{A/B test convergence for EIG estimators (left column): 100*10 samples from $p(\theta,y)$ and 30/100/1000 samples from $q_\phi(\theta|y,\xi)$ (10,000, 30,000, 50,000 samples for the MINE estimator); $\Ereige$ estimators (right column): 100 $\theta$ samples from $p(\theta)$ and 10 y samples from each $\theta_i$ and 30 posterior samples for the marginal likelihood distribution (1000 $\theta$ and $10$ y samples for the MINE estimator). }
\label{fig:abtest}
\end{center}
\end{figure}

\subsubsection{Preference}
Another benchmark that we use to test the effectiveness of $\Ereige(p,\xi)$'s log-sum-exp stabilization is Preference test. The Preference experiment is designed to understand consumer behavior with a utility function \citep{samuelson1948consumption, NEURIPS2019_d55cbf21}. The experiment provides the proposal to subjects and checks their preference as an output to use in the utility function. 

We use the experiment from \citep{NEURIPS2019_d55cbf21}. 
We start from different reference prior distribution using $-7.35$ as the mean instead of $0$. The KL divergence from the original prior distribution is $\epsilon = 0.2$. VNMC estimator in our test shows more error with a small number of samples M = 30 (\cref{fig:preference}, top left), while ACE estimator shows accurate estimation for the EIG with small number of marginalization samples (\cref{fig:preference}, middle left). MINE estimator, on the other hand, estimate the EIG lower than the true EIG (\cref{fig:preference}, bottom left). But the optimal experimental set-up are same with the true EIG.

We further estimate $\reigvnmc$ and $\reigace$ estimator using only M = 30 samples and MINE, with an increasing series of ambiguity set radii $\epsilon$ (\cref{fig:preference}, right). By applying ambiguity set 0.01, the $\EIG$ estimation boundary becomes tight. If we apply ambiguity set $\epsilon = 0.1$, the lower bound (ACE) becomes higher than the true $\EIG$ while the upper bound (VNMC) becomes lower than the True $\EIG$ (\cref{fig:preference}, right). The $\reigace$ and $\reigvnmc$ are no longer the corresponding upper bound and lower bound. $\reigmine$ with $\epsilon = 0.1$ increase the EIG estimation and change the optimal experimental set up (\cref{fig:preference}, bottom right). 
 
\begin{figure}
\begin{center}
\centerline{\includegraphics[width=\columnwidth]{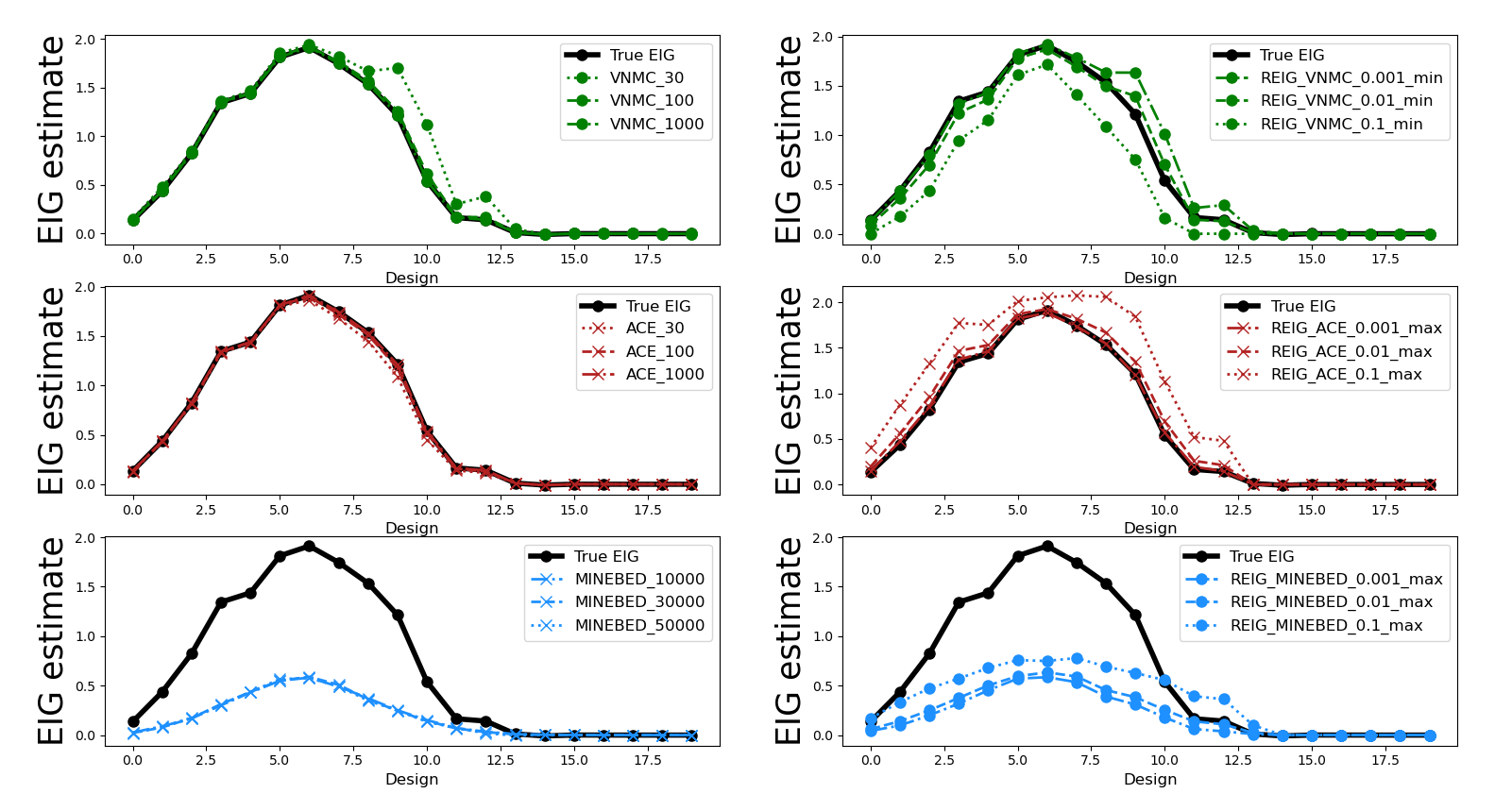}}
\caption{Preference test convergence for EIG estimators (left column): 100*10 samples from $p(\theta,y)$ and 30/100/1000 samples from $q_\phi(\theta|y,\xi)$ (10,000, 30,000, 50,000 samples for the MINE estimator); $\Ereige$ estimators (right column): 100 $\theta$ samples from $p(\theta)$ and 10 y samples from each $\theta_i$ and 30 posterior samples for the marginal likelihood distribution (1000 $\theta$ samples and 10 y samples for MINE estimator). }
\label{fig:preference}
\end{center}
\end{figure}

\subsubsection{Pharmacokinetic model}
The last benchmark that we use to test the effectiveness of $\Ereige(p,\xi)$ is Pharmacokinetic study. The Pharamcokinetic study is designed to understand how the medicine is absorbed, distributed and eliminated in the subject's body, which we can understand with the compartmental model. We can estimate the compartment model's parameter by blood sampling and we want to calculate the optimal blood sampling time. \citep{ryan2014towards, kleinegesse2020bayesian} 

We start from a different reference prior distribution with its mean taken to be $[0.1, \log 0.1 \log 20]$ whose KL divergence from the original prior distribution is 0.1. 
The accuracy of the estimators $\hat\EIG^{\VNMC}$ and $\hat\EIG^{\ACE}$ in our tests have increases with M (\cref{fig:pkmodel}, top left, middle left) while $\hat\EIG^{\MINE}$, on the other hand, does converge to a substantial underestimate (\cref{fig:pkmodel}, bottom left), yet identifies the same optimal experiment.

We further compute $\reigvnmc$ and $\reigace$ estimators using only M = 30 samples and $\reigmine$, with an increasing series of ambiguity set radii $\epsilon$ (\cref{fig:pkmodel}, right). By applying ambiguity set 0.1, the lower bound (ACE) becomes higher than the true $\EIG$ while the upper bound (VNMC) becomes lower than the True $\EIG$ (\cref{fig:pkmodel}, right). The $\reigace$ and $\reigvnmc$ are no longer the corresponding upper bound and lower bound. MINE estimator with ambiguity set $\epsilon = 0.1$, on the other hand, improve the EIG estimation so that the estimation is close to the true EIG (\cref{fig:pkmodel}, bottom right). 

\begin{figure}
\begin{center}
\centerline{\includegraphics[width=\columnwidth]{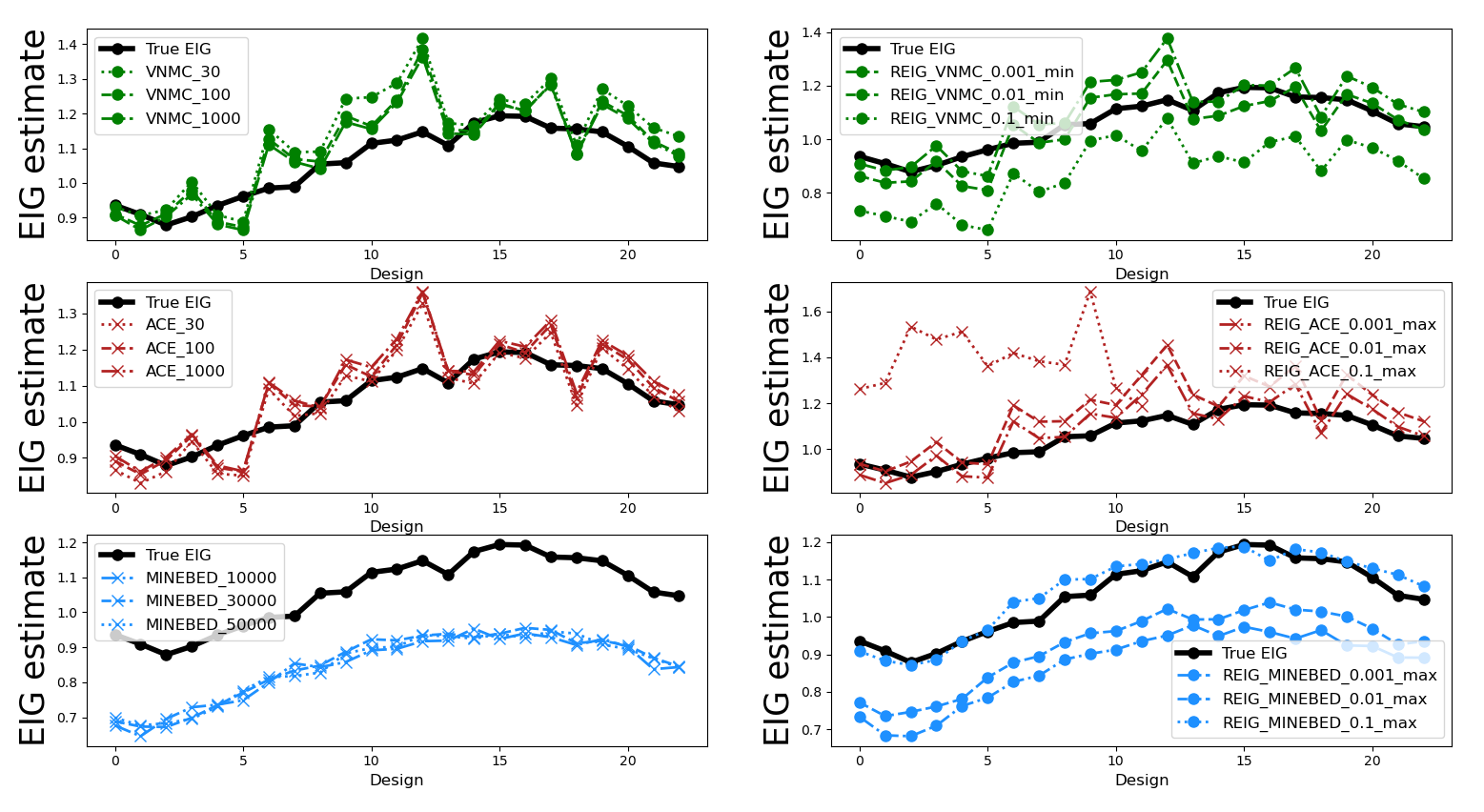}}
\caption{Pharmacokinetic model convergence for EIG estimators (left column): 100*10 samples from $p(\theta,y)$ and 30/100/1000 samples from $q_\phi(\theta|y,\xi)$ (10,000, 30,000, 50,000 samples for the MINE estimator); $\Ereige$ estimators (right column): 100 $\theta$ samples from $p(\theta)$ and 10 y samples from each $\theta_i$ and 30 posterior samples for the marginal likelihood distribution (1000 $\theta$ samples and 10 y samples for MINE estimator). }
\label{fig:pkmodel}
\end{center}
\end{figure}

\section{Discussion}

This work presents an introduction and initial numerical experiments testing the use of a robust modification of expected information gain as a criterion for Bayesian model selection.
The $\Ereige$ estimator is designed to both have rigorously defined approximation properties (as the minimization of a tangent approximation to the EIG over a convex ambiguity set), and to yield a practical algorithm in practice (\cref{alg:reigalgo}, which post-processes the samples of previously defined estimators by the solution of a 1D convex optimization problem). 

While our initial results are promising, our understanding of how to apply this method is not complete at this time.  Most of the remaining questions relate to the choice of the radius $\epsilon$ of the ambiguity set over which the approximated EIG is taken to be robust.  A definite a priori estimate for $\epsilon$ seems unlikely in most cases.

In an approach analogous to Morozov's discrepancy principle, a logical choice of $\epsilon$ would be one such that the implied radius of the ambiguity set is on the same order as the error that the sampling based estimator (such as VNMC or ACE) has introduced into the problem.  The fact that these two estimators provide (in expectation) an upper and lower bound for the true EIG of a design suggests that there may be a way to combine the two estimators into an a posteriori estimate of the proper choice of $\epsilon$.  Further investigation into this topic is the subject of further research.

\begin{acknowledgements} 
    We would like to thank Alexander Shapiro for several stimulating discussions on ambiguity set.
\end{acknowledgements}

\bibliography{uai2022-template}

\begin{thebibliography}{19}
\providecommand{\natexlab}[1]{#1}
\providecommand{\url}[1]{\texttt{#1}}
\expandafter\ifx\csname urlstyle\endcsname\relax
  \providecommand{\doi}[1]{doi: #1}\else
  \providecommand{\doi}{doi: \begingroup \urlstyle{rm}\Url}\fi

\bibitem[Bayraksan and Love(2015)]{bayraksan2015data}
G{\"u}zin Bayraksan and David~K Love.
\newblock Data-driven stochastic programming using phi-divergences.
\newblock In \emph{The Operations Research Revolution}, pages 1--19. INFORMS,
  2015.

\bibitem[Box et~al.(1978)Box, Hunter, Hunter, et~al.]{box1978statistics}
George~EP Box, William~H Hunter, Stuart Hunter, et~al.
\newblock \emph{Statistics for experimenters}, volume 664.
\newblock John Wiley and sons New York, 1978.

\bibitem[DasGupta and Studden(1991)]{dasgupta1991robust}
A~DasGupta and WJ~Studden.
\newblock Robust bayesian experimental designs in normal linear models.
\newblock \emph{The Annals of Statistics}, 19\penalty0 (3):\penalty0
  1244--1256, 1991.

\bibitem[Embretson and Reise(2013)]{embretson2013item}
Susan~E Embretson and Steven~P Reise.
\newblock \emph{Item response theory}.
\newblock Psychology Press, 2013.

\bibitem[Foster et~al.(2019)Foster, Jankowiak, Bingham, Horsfall, Teh,
  Rainforth, and Goodman]{NEURIPS2019_d55cbf21}
Adam Foster, Martin Jankowiak, Elias Bingham, Paul Horsfall, Yee~Whye Teh,
  Thomas Rainforth, and Noah Goodman.
\newblock Variational bayesian optimal experimental design.
\newblock In \emph{Advances in Neural Information Processing Systems},
  volume~32. Curran Associates, Inc., 2019.

\bibitem[Foster et~al.(2020)Foster, Jankowiak, O’Meara, Teh, and
  Rainforth]{foster2020unified}
Adam Foster, Martin Jankowiak, Matthew O’Meara, Yee~Whye Teh, and Tom
  Rainforth.
\newblock A unified stochastic gradient approach to designing bayesian-optimal
  experiments.
\newblock In \emph{International Conference on Artificial Intelligence and
  Statistics}, pages 2959--2969. PMLR, 2020.

\bibitem[Huan(2010)]{huan2010accelerated}
Xun Huan.
\newblock \emph{Accelerated Bayesian experimental design for chemical kinetic
  models}.
\newblock PhD thesis, Massachusetts Institute of Technology, 2010.

\bibitem[Kleinegesse and Gutmann(2020)]{kleinegesse2020bayesian}
Steven Kleinegesse and Michael~U Gutmann.
\newblock Bayesian experimental design for implicit models by mutual
  information neural estimation.
\newblock In \emph{International Conference on Machine Learning}, pages
  5316--5326. PMLR, 2020.

\bibitem[Kohavi et~al.(2009)Kohavi, Longbotham, Sommerfield, and
  Henne]{kohavi2009controlled}
Ron Kohavi, Roger Longbotham, Dan Sommerfield, and Randal~M Henne.
\newblock Controlled experiments on the web: survey and practical guide.
\newblock \emph{Data mining and knowledge discovery}, 18\penalty0 (1):\penalty0
  140--181, 2009.

\bibitem[Owhadi et~al.(2015)Owhadi, Scovel, and
  Sullivan]{owhadi2015brittleness}
Houman Owhadi, Clint Scovel, and Tim Sullivan.
\newblock On the brittleness of bayesian inference.
\newblock \emph{siam REVIEW}, 57\penalty0 (4):\penalty0 566--582, 2015.

\bibitem[Ryan et~al.(2016{\natexlab{a}})Ryan, Drovandi, McGree, and
  Pettitt]{ryan2016fully}
EG~Ryan, CC~Drovandi, JM~McGree, and AN~Pettitt.
\newblock Fully bayesian optimal experimental design: A review.
\newblock \emph{International Statistical Review}, 84:\penalty0 128--154,
  2016{\natexlab{a}}.

\bibitem[Ryan et~al.(2014)Ryan, Drovandi, Thompson, and
  Pettitt]{ryan2014towards}
Elizabeth~G Ryan, Christopher~C Drovandi, M~Helen Thompson, and Anthony~N
  Pettitt.
\newblock Towards bayesian experimental design for nonlinear models that
  require a large number of sampling times.
\newblock \emph{Computational Statistics \& Data Analysis}, 70:\penalty0
  45--60, 2014.

\bibitem[Ryan et~al.(2016{\natexlab{b}})Ryan, Drovandi, McGree, and
  Pettitt]{ryan2016review}
Elizabeth~G Ryan, Christopher~C Drovandi, James~M McGree, and Anthony~N
  Pettitt.
\newblock A review of modern computational algorithms for bayesian optimal
  design.
\newblock \emph{International Statistical Review}, 84\penalty0 (1):\penalty0
  128--154, 2016{\natexlab{b}}.

\bibitem[Ryan(2003)]{ryan2003estimating}
Kenneth~J Ryan.
\newblock Estimating expected information gains for experimental designs with
  application to the random fatigue-limit model.
\newblock \emph{Journal of Computational and Graphical Statistics}, 12\penalty0
  (3):\penalty0 585--603, 2003.

\bibitem[Samuelson(1948)]{samuelson1948consumption}
Paul~A Samuelson.
\newblock Consumption theory in terms of revealed preference.
\newblock \emph{Economica}, 15\penalty0 (60):\penalty0 243--253, 1948.

\bibitem[Shapiro(2017)]{shapiro2017distributionally}
Alexander Shapiro.
\newblock Distributionally robust stochastic programming.
\newblock \emph{SIAM Journal on Optimization}, 27\penalty0 (4):\penalty0
  2258--2275, 2017.

\bibitem[Shapiro et~al.(2021)Shapiro, Zhou, and Lin]{shapiro2021bayesian}
Alexander Shapiro, Enlu Zhou, and Yifan Lin.
\newblock Bayesian distributionally robust optimization.
\newblock \emph{arXiv preprint arXiv:2112.08625}, 2021.

\bibitem[Stark(2015)]{stark2015constraints}
Philip~B Stark.
\newblock Constraints versus priors.
\newblock \emph{SIAM/ASA Journal on Uncertainty Quantification}, 3\penalty0
  (1):\penalty0 586--598, 2015.

\bibitem[Yang and Berger()]{yang1996catalog}
Ruoyong Yang and James~O Berger.
\newblock \emph{A catalog of noninformative priors}.

\end{thebibliography}

\appendix
\section{Experiments Details}
\subsection{A/B test}
The reference prior and likelihood for the A/B test were as follows:
\begin{gather}
\theta \sim \mathcal{N}(\begin{pmatrix} 0 \\ 0
\end{pmatrix}
,\begin{pmatrix}
10^2 & 0 \\
0 & 1.82^2 
\end{pmatrix}), y|\theta,\xi \sim \mathcal{N}(X_\xi\theta,I)
\end{gather}

\subsection{Preference}
We use the utility function from \citep{NEURIPS2019_d55cbf21}.
\begin{gather}
    \text{let \ }\xi \in \mathrm{R} \\
    \theta \sim \mathcal{N}(-7.35, 20^2) \\ \eta|\theta,\xi \sim \mathcal{N}(\xi-\theta,1 + |\xi|^2)\\ y = f(\eta) \\ 
    f : \mathrm{R} \rightarrow [\epsilon, 1-\epsilon] \\
    x \rightarrow \begin{cases}
    \epsilon & \text{if } x\leq \text{logit}(\epsilon)\\
    1-\epsilon,              & \text{if} x \geq \text{logit}(1 - \epsilon) \\ 
    \frac{1}{1-e^{-x}} & \text{otherwise},
    \end{cases}
\end{gather}

\subsection{Pharmacokinetic model}
The prior distribution and noise distributions for the pharmacokinetic model are defined as follows,


\begin{gather}
    \begin{pmatrix} k_a \\ k_e \\ V
    \end{pmatrix}
     \sim 
    \log \mathcal{N} \begin{bmatrix}\begin{pmatrix}
    0.1 \\ \log0.1 \\ \log20
    \end{pmatrix}, \begin{pmatrix}
0.05 & 0 & 0 \\
0 & 0.05 & 0 \\
0 & 0 & 0.05
\end{pmatrix}\end{bmatrix} \\ 
y_t = \frac{400}{V}\frac{k_a}{k_a - k_e}(\exp{-k_et}-\exp{-k_at})(1 + \epsilon_{1t}) + \epsilon_{2t},
\end{gather}
where $\epsilon_{1t} \sim \mathcal{N}(0,0.01), \epsilon_{2t} \sim \mathcal{N}(0,0.1)$, and $k_a > k_e$.

\clearpage

\end{document}